\newcommand{\Rmnum}[1]{\expandafter\@slowromancap\romannumeral #1@}
\newcommand{\sources}{\mathcal{S}} 
\newcommand{\options}{\mathcal{O}} 
\newcommand{\reav}{\Upsilon} 
\newcommand{\rea}{\upsilon} 
\newcommand{\reasetv}{\bm{\Upsilon}} 
\newcommand{\reaset}{\bm{\upsilon}} 
\newcommand{\Reaset}{\mathcal{T}} 
\newcommand{\fb}{\bm{f}} 
\newcommand{\Fb}{\mathcal{F}} 
\newcommand{\Fv}{\bm{F}} 
\newcommand{\corre}{O}
\newcommand{\pv}{\bm{p}} 
\newcommand{\pdis}{P} 
\newcommand{\pdisv}{\bm{P}}
\newcommand{\pest}{\hat{p}} 
\newcommand{\pestv}{\hat{\bm{p}}} 
\newcommand{\pestdis}{\hat{P}} 
\newcommand{\pestdisv}{\bm{\hat{P}}} 
\newcommand{\trustworthiness}{trustworthiness}
\newcommand{\trust}{trust}
\newcommand{\correctness}{correctness}
\newcommand{\ds}{\mathcal{D}}
\newcommand{\dsvw}{\mathbb{D}_{W}}
\newcommand{\weight}{w}
\newcommand{\wmv}{\mathcal{D}_{W}}
\newcommand\prob{\mathbb{P}}
\newcommand\expect[1]{\mathbb{E}#1}
\newcommand{\dsc}{\omega}
\newcommand{\dscpep}[0]{\dsc(\pestv,\pv)}
\newcommand{\dscpp}[0]{\dsc(\pv,\pv)}
\newcommand{\dscqq}[0]{\dsc(\bm{q},\bm{q})}
\newcommand{\dscpepe}[0]{\dsc(\pestv,\pestv)}
\newcommand{\dscpedispdis}[0]{\dsc(\pestdisv,\pdisv)}
\newcommand{\dscpdispdis}[0]{\dsc(\pdisv,\pdisv)}
\newcommand{\dscpepdis}[0]{\dsc(\pestv,\pdisv)}
\newcommand{\dscpedisp}[0]{\dsc(\pestdisv,\pv)}
\newtheorem{theorem}{Theorem}
\newtheorem{definition}{Definition}
\newtheorem{lemma}{Lemma}
\newtheorem{corollary}{Corollary}
\newtheorem{myexp}{Example}
\title[AAMAS-2024 Formatting Instructions]{Stability of Weighted Majority Voting under Estimated Weights}
\author{Shaojie Bai}
\affiliation{
  \institution{Zhejiang University}
  \city{Hangzhou}
  \country{China}}
\email{white.shaojie@gmail.com}
\author{Dongxia Wang*}\thanks{* Corresponding author}
\affiliation{
  \institution{Zhejiang University \&}
  \institution{ZJU-Hangzhou Global Scientific and Technological Innovation Center}
  \city{Hangzhou}
  \country{China}}
\email{dxwang@zju.edu.cn}
\author{Tim Muller}
\affiliation{
  \institution{University of Nottingham}
  \city{Nottingham}
  \country{United Kingdom}}
\email{tim.muller@nottingham.ac.uk}
\author{Peng Cheng*}
\affiliation{
  \institution{Zhejiang University}
  \city{Hangzhou}
  \country{China}}
\email{saodiseng@gmail.com}
\author{Jiming Chen}
\affiliation{
  \institution{Zhejiang University}
  \city{Hangzhou}
  \country{China}}
\email{cjm@zju.edu.cn}
\begin{abstract}
\emph{Weighted Majority Voting} (WMV) is a well-known decision making rule. The weights of sources are determined by the probabilities that sources provide accurate information (\emph{trustworthiness}).
However, in reality, the trustworthiness is usually not a known quantity to the decision maker -- they have to rely on an estimate called \emph{trust}.
An algorithm that computes trust is called \emph{unbiased} when it has the property that it does not systematically overestimate or underestimate the trustworthiness. 
To formally analyze the uncertainty to the decision process brought by such unbiased trust values, we introduce and analyze two important properties of WMV: \emph{Stability of Correctness} and \emph{Stability of Optimality}.
Stability of Correctness measures the difference between the decision accuracy that the decision maker believes he can achieve and the accuracy he actually achieves.
We prove Stability of Correctness absolutely holds for WMV -- the difference is $0$.
Stability of Optimality measures the difference between the actual accuracy of decisions made using trust values, and those made using trustworthiness values.
We find a relatively tight upper bound on the Stability of Optimality, meaning that, although using (unbiased) trust values is suboptimal compared to using the true trustworthiness values, the difference is small.
Meanwhile, a counter-intuitive observation is that while distributions of trustworthiness influence the Stability of Optimality, the number of sources barely influences it.
We also provide an overview of how sensitive decision accuracy is to the changes in trust and trustworthiness.
\end{abstract}
\keywords{Weighted Majority Voting, Trust, Stability of Decision Making}
\newcommand{\BibTeX}{\rm B\kern-.05em{\sc i\kern-.025em b}\kern-.08em\TeX}
\gdef\@copyrightpermission{
	\begin{minipage}{0.3\columnwidth}
		\href{https://creativecommons.org/licenses/by/4.0/}{\includegraphics[width=0.90\textwidth]{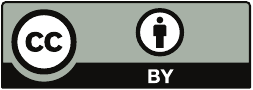}}
	\end{minipage}\hfill
	\begin{minipage}{0.7\columnwidth}
		\href{https://creativecommons.org/licenses/by/4.0/}{This work is licensed under a Creative Commons Attribution International 4.0 License.}
	\end{minipage}
	\vspace{5pt}
}
\begin{document}


\pagestyle{fancy}
\fancyhead{}


\maketitle 


\section{Introduction}
Crowd wisdom has been playing a fundamental role in helping make better decisions in many scenarios, e.g., hiring workers for labeling tasks in crowdsourcing \cite{luo2023incentivizing}, aggregating classifiers for prediction in ensemble learning \cite{kotary2023differentiable}, asking for opinions of reliability in online rating systems \cite{carbo2023promoting,ge2023trustworthiness}, etc.
Decisions are derived based on aggregating the information or feedback from a collection of sources, the quality of which can be variable. It can be inaccurate due to lack of expertise, mistakes or malice, e.g., low-quality labeling for machine learning, fake ratings introduced by sellers to promote their reputation, etc. 

Among the aggregation mechanisms, Weighted Majority Voting has long been a popular one.
Basically, each source supports an option and is assigned a weight.
WMV chooses the feedback option that is supported by sources with the maximal total weight. 
WMV has been seeing its use in a variety of domains ranging from voting \cite{nitzan:1982optimal}, crowdsourcing \cite{dawid:1979DGModel}, classification \cite{LITTLESTONE:1994WMAlgorithm} to trust systems \cite{yu2004:WMV_in_trust} and even distributed systems \cite{tong:1991DSystem_vote}.
In different contexts, the weight of a source can mean differently.
For instance, in determining a collective choice that is widely acceptable to individuals with diverse preference~\cite{sen1977social},
WMV is used for preference aggregation and the weight means the importance of an individual.
We are more interested in the scenarios where there is a notion of \emph{correctness} (or accuracy) of decisions, and the weight of a source depends on how trustworthy it is in providing the feedback that corresponds to the correct decision, denoted as \emph{\trustworthiness}, which is usually modelled as a probability value.
The examples include the aggregation of the crowd-sourced labels~\cite{li:2014errorbound_WMV}, the crowd-sensed navigation data~\cite{james2020sybil}, or the outputs of the multiple classifiers~\cite{manino:2019NaiveByes_Multiclass}, etc.


While WMV is proven to be optimal when source \emph{trustworthiness} is given \cite{nitzan:1982optimal}\footnote{To provide feedback independently is also required for the optimality of WMV~\cite{muller:2020MPR,nitzan:1982optimal}.}, in practice, decision makers have to resort to an estimation or a belief (denoted as \emph{trust}) that may not equal to the actual values of \trustworthiness{}\footnote{Note that we use ``trust'' and ``trustworthiness'' to differentiate between what a decision maker trusts or estimates as the probability of a source's suggesting correctly (regardless of whether he believes in the value or is aware that its just an estimate), and the actual probability value (Refer to their definition difference~\cite{walter:2008cambridge_dictionary}).}. 
Deviation in estimation may decrease decision accuracy.
There exists plenty of effort to improve the estimation of source trustworthiness by learning from historical data (e.g., direct observations or indirect evidences), with a principle that more data increases the confidence in the estimation~\cite{2013BEREND:estimate_error_0,wu2016:estimate_error_0}.
Several approaches even treat the belief about source trustworthiness as its actual values~\cite{mazzetto:2021semi_supervised_MV,maystre:2021_noiselabel_crowdsourcing}.
However, no algorithm always produces perfect trust values.
It is worth studying how the quality of the trust values impacts the decision quality of WMV.
Is WMV able to maintain a tolerant level of decision incorrectness with the inaccuracy in the estimation bounded, meaning having certain levels of stability w.r.t the inaccurate estimation?

In this paper, we propose a formal analysis of the stability properties of WMV.
Firstly, we study how sensitive the decision accuracy is to the changes in source \trust{} and \trustworthiness{}, with both the arguments taking fixed values. 
We find that unsurprisingly, decision accuracy decreases with the increasing deviation from \trust{} to \trustworthiness{}, and a sufficiently small deviation barely influences the accuracy. 
Besides, compared with overestimating, underestimating \trustworthiness{} is usually less harmful to the decision accuracy.
Secondly, we study the influence of \trust{} and \trustworthiness{} in a statistical way. 
Considering that a decision maker may sometimes overestimate source \trustworthiness{} while sometimes underestimate it, the expectation remains correct -- unbiased estimation\footnote{Generally, the estimation error always exists, but it is relatively small and can be zero on average with sufficient data \cite{2013BEREND:estimate_error_0,freedman1963:consistency_bayes}}.
We define two types of stability based on such unbiased estimation: \emph{Stability of Correctness} and \emph{Stability of Optimality}. 
\emph{Stability of Correctness} reasons whether the decision accuracy a decision maker believes he achieves (i.e., the accuracy he computes with \trust) equals what he actually achieves (i.e., the accuracy computed with \trustworthiness{}). 
We prove that whatever distribution source \trustworthiness{} follows, as long as the estimation is unbiased, a decision maker gets the accuracy as if the \trustworthiness{} is known -- absolute stability. 
This means that the shape and variance of \trustworthiness{} are irrelevant to the Stability of Correctness.

\emph{Stability of Optimality} reasons whether the decisions made based on unbiased trust values are as good as those made based on trustworthiness. 
Considering trustworthiness is usually unknown, Stability of Optimality measures the \emph{gap} between the practical situation where the decision maker decides with trust, and where (magically) he has access to the actual trustworthiness.
We prove that \emph{Stability of Optimality} does not hold for WMV, but the degradation in decision accuracy caused by the incorrect but (averagely) unbiased \trust{} is relatively tightly bounded.
That is, decision accuracy with unbiased \trust{} will not be too far off the theoretically determined value.
Moreover, unlike \emph{Stability of Correctness}, the distribution of \trustworthiness{} influences the upper bound of that accuracy gap, and also determines how well the accuracy can be in the ideal situation, namely where \trustworthiness{} is given.
Last but not least, while it may usually be perceived that more sources improve accuracy, we observe counterintuitively that source number influences little on the accuracy gap.

The rest of this paper is organized as follows. In Section 2 the related work is presented. In Section 3 we introduce a formal framework to study WMV decision rule. In Section 4 we present how \trust{} and \trustworthiness{} influence the decision accuracy of WMV. In Section 5 we analyze the two types of stability. 
The numerical analysis is also performed where needed to demonstrate theories.

\section{Related Work}
The Weighted Majority Voting rule has been studied in several domains, e.g., decision theory, voting theory, management science, and receiving various applications. 
We focus on the scenarios where the weight of a source or ``voter'' is determined by how trustworthy it is in suggesting the correct decision.
Some approaches utilizing WMV assume source trustworthiness is given~\cite{nitzan:1982optimal,berend:2015MPR-NIPS}, although in practice it is usually unknown.
Plenty of work focuses on modeling and learning source trustworthiness from observation and interaction history \cite{Zeynalvand21,wu2023crowdsourcing,ge2023trustworthiness}.
Some researchers model trust as a probability value. 
To get trust, they either rely on frequency estimation by counting the times of making the right decisions~\cite{2013BEREND:estimate_error_0}, or solving an optimization problem based on their models by minimizing the decision error rate \cite{rekatsinas2017slimfast} or maximizing the likelihood
\cite{dong:2015knowledge_WebSources,manino:2019Bayesian_crowdsourcing,meir2023frustratingly}.
Moreover, model-checking-based methods are also applied in quantifying the probability of trust on individual agents, representing the agent’s own beliefs \cite{drawel2020specification,bentahar2022quantitative,drawel2022formal,telang2023maintenance}.
Besides, trust also can be modeled as a random variable.
Bayesian models have also been proposed and applied to this problem by~\cite{raykar:2010learning_p_by_bayesian,sardana2018bayesian,guo2023multi}, combining the prior knowledge and the observations to infer the trustworthiness. 
Expectation Maximization-based methods are also proposed to estimate source trustworthiness and the correct decision at the same time, via iterative updating~\cite{dawid:1979DGModel,zhang:2016_Spectral_Methods_Meet_EM}.

Such learned trust is sometimes treated as an estimation of the source \trustworthiness{} with the deviation considered \cite{gao:2016NaiveBayes_Errorrate,wu:2021chebyshev_PACBounds_of_WMV}, while sometimes treated equivalently as \trustworthiness{}, namely as the probability of a source suggesting correctly and is further used to evaluate decision accuracy
~\cite{LITTLESTONE:1994WMAlgorithm,guan:2018_computer-aided_diagnosis,martin2023strong}.
However, \trust{} essentially represents the belief of a decision maker about the source quality, which may deviate from the actual probability. 
And he may not gain the claimed decision accuracy based on \trust{}.

Besides the efforts in modeling and learning trustworthiness, there exists work that theoretically analyzes how trustworthiness and trust would influence decision accuracy, which is most relevant to ours. 
Given trustworthiness, the decision accuracy of WMV is analyzed in \cite{berend:2015MPR-NIPS} without considering the learning process of trustworthiness. 
On the other hand, some other work takes the learning process into consideration.
To measure the estimation quality, the decision accuracy bounds for learning algorithms have been proposed through PAC techniques in \cite{lacasse:2006pac_for_MV,germain:2015PAC_Boundsof_WMV,wu2021chebyshev}. 
Considering \trust{} is derived from finite samples, some researchers then study precise characterizations of the relationship between the decision accuracy and the sample size in \cite{gao:2016NaiveBayes_Errorrate}.
More recently, tighter bounds for decision accuracy under arbitrary estimation are provided, ignoring particular assumptions for trustworthiness ~\cite{manino:2019NaiveByes_Multiclass}.
Unfortunately, none of them have analyzed the relationship between the estimate error and the decision accuracy of WMV in a quantitative way.

\section{Preliminaries}

In this section, we outline a formal framework to support our study of the stability of Weighted Majority Voting decision rule. Note that the capital letters represent random variables, and the lower cases represent non-random variables. The bold letters represent a vector of multiple variables, and the non-bold letters represent single variables.



Consider a decision-making scenario, a decision maker is faced with multiple possible decisions $\options = \{o_1, \dots, o_K\}$ and only one of them is correct.
The random variable $\corre$ determines which of the options is actually correct, e.g., $\corre {=} o_1$ if $o_1$ is the correct decision.
The decision maker receives feedback from a set of sources, $\sources = \{s_1,..,s_n\}$.
The random variable $F_i$, with $f_i$ denoting an outcome, represents the feedback of source $s_i$.
The feedback may or may not correspond to the correct decision.
For WMV, we assume\footnote{For model general decision-making scenarios, the options of feedback and that of decisions may not necessarily equal and may take a many-to-one mapping.} a one-to-one correspondence between the feedback that suggests the correct decision and the correct decision itself, and denote $F_i = \corre$ iff $f_i$ suggests correctly, $F_i \in \options$. 
Feedback of all the sources is represented by random variable $\Fv : \Fv = (F_1, \dots, F_n)$, with $\fb{:}\fb {=} (f_1, \dots, f_n)$ denoting an outcome, and its sample space is defined as $\Fb{:}\fb{\in}\Fb$.
A decision mechanism is a function: $\ds : \Fb {\to} \options$.
The quantity that the decision maker wants to maximize is the probability of making the correct decisions (which we shorthand as \emph{decision accuracy} or \emph{decision correctness} throughout the paper): $\prob(\ds(\Fv) = \corre)$.

we define $\reav_i$ as a $\{-1,1\}$-indicator random variable of whether source $s_i$ suggests the correct decision and $\rea_i $ as one of its outcome: $\reav_i {=} 1$ if $F_i {=} \corre$ and $\reav_i {=} -1$ if $F_i {\neq} \corre$.
For the indicator variables of all the sources i.e., $\reasetv : \reasetv = (\reav_1, \dots, \reav_n)$, one of its samples is an \emph{indicator vector} i.e., $\reaset {=} (\rea_1,\dots,\rea_n), \reaset \in \Reaset$.
$\Reaset$ denote the sample space of $\reasetv$. 
Let $-\reaset {=} (-\rea_1,\dots,-\rea_n)$ denote the \emph{opposite} indicator vector of $\reaset$ where source indicators are flipped.
The set of all the possible feedback under $\reaset$ is denoted as $\Fb_{\reaset}$.

We use the following running example in this section to demonstrate the relevant concepts.
\begin{myexp}\label{exp1}
There are three sources $\sources {=} \{s_1,s_2,s_3\}$. If $\options = \{A,B\}$, $\corre {=} B$ and the indicator vector $\reaset = (1,1,-1)$, then $\fb = (B,B,A)$ and $\Fb_{\reaset} = \{(B,B,A),(A,A,B)\}$.
\end{myexp}

When decision ``correctness'' is a concern, Weighted Majority Voting usually considers how probable each source suggests the correct decision.
For source $i$, let $\prob(F_i {=} \corre) = p_i$ and $\pv = (p_1,\dots,p_{n})$.
Hence $\prob(\reav_i = 1) = p_i$.
We refer to $p_i$ as the \emph{trustworthiness} of source $s_i$.
In practice, the trustworthiness of a source is usually unknown to a decision maker.
And an estimation is used, denoted as $\pest_i$, with $\pestv = (\pest_1, \dots \pest_n)$.
We call the value $\pest_i$ \emph{trust}, which represents the subjective estimation or belief of the decision maker regarding how probable a source suggests correctly.
There exist multiple ways to compute $\pest_i$, e.g., counting the frequency of making correct decisions, or Bayesian learning methods based on prior interaction data.
In the literature, the trustworthiness of a source can have different meanings, e.g. honesty of an agent in a rating system~\cite{muller:2020MPR},
competency of a voter~\cite{Condorcet:2014essai}, reliability of a worker in crowdsourcing~\cite{dawid:1979DGModel}, correctness of a sensor in crowdsensing~\cite{moslem2012crowdsensing}, etc.
Whatever the meanings, $p_i$ represents an intrinsic quality or the fact that how probable the source reports correctly, while $\pest_i$ represents how the decision maker thinks of or estimates that probability~\cite{walter:2008cambridge_dictionary}. 
We assume sources independently provide feedback, hence $\prob(\reasetv = \reaset) = \prod_{i:\rea_i {=} 1} p_i \cdot \prod_{i:\rea_i {=} -1} (1{-}p_i)$.

In Example~\ref{exp1}, suppose $\pv = (0.6,0.6,0.7)$, the estimation of $\pv$ by a decision maker may be inaccurate: $\pestv = (0.6,0.7,0.8)$. 



Below, we introduce the Weighted Majority Voting (WMV) decision scheme.
It can be treated as an extension of the more commonly known \emph{Majority Voting} decision scheme.
The difference is that Majority Voting treats sources without distinguishing, while WMV assigns sources different weights. 
The weight of a source is usually determined by how trustworthy its feedback is.
Formally:
\begin{definition}[Weighted Majority Voting $\wmv$]
Given a set of $n$ sources $\sources$, their trustworthiness $\pv$ and independent feedback $\fb$, $\wmv$ makes decisions via the function \cite{muller:2020MPR,nitzan:1982optimal}:

\begin{equation} \label{eq:WeightMajorityRule}
\wmv(\fb)= \text{argmax}_{o\in\options} \left(\sum_{i:f_i=o}{\weight_i}\right)
\end{equation}
\\
where $f_i \in \options$, $\weight_i = \log (\nicefrac{p_i}{1 - p_i})$ with $p_i\geq 0.5$.
\end{definition}

To give an instance, consider Example~\ref{exp1}, suppose $\pv = (0.6,0.6,0.9)$, $\options = (A,B)$, $\corre = B$ and $\reaset = (1,1,-1)$, then $\fb = (B,B,A)$.
$w_1 \approx 0.18, w_2 \approx 0.18, w_3 \approx 0.60$.
since $\weight_1 + \weight_2 < \weight_3$,  $\wmv(\fb) = A$.

Here $p_i \geq 0.5$ and the log weight function are well-known for classical WMV in the literature
\cite{nitzan:1982optimal,grofman1983thirteen}, where trust and trustworthiness are
not distinguished.
The assumption $p_i \geq 0.5$ means that sources with $p_i < 0.5$ are ignored.
For a source with $p_i < 0.5$, a decision maker may assign negative weight to its feedback.
Or he can just simply reverse the vote of the source (e.g., replacing the reported option A with C). 
But if either the operation is realized by the malicious sources, they can push the decision to a wrong one by reporting correctly, purposely reducing the chance of correct option being selected. 
Therefore, it is in the interest of the decision maker to ignore such sources.

It has been shown in the literature that the decision accuracy of WMV is determined by the indicator vectors where it always decides correctly (an example is where all sources report correctly).
For such indicator vectors, whether a decision is correct is not influenced by the feedback of the sources that suggest incorrectly.
To give an opposite example, consider Example~\ref{exp1}.
Suppose $\options = \{A,B,C\}$, $\pv = (0.70,0.65,0.65)$ and $\reaset = (1,-1,-1)$ (only $s_1$ reports correctly). 
We get $(w_1,w_2,w_3) \approx (0.37,0.27,0.27)$.
Both the feedback $\fb = (A,B,C)$ and $\fb' = (A,C,C)$ are possible under $\reaset$ (both belong to $\Fb_{\reaset}$. 
However, WMV decides correctly by choosing $A$ under $\fb$ and decides incorrectly by choosing $C$ under $\fb'$. 
Whether WMV decides correctly is influenced by what incorrect feedback is.
Given the same $\pv$, for $\reaset' = (1,1,-1)$, it can be seen that whatever $s_3$ reports, WMV can always decides correctly by trusting $s_1,s_2$.




Let $\dsvw (\pv)$ denote the set of all the indicator vectors where WMV always decides correctly when using $\pv$, namely $\dsvw(\pv) = \{\reaset | \wmv(\fb) = \corre, \fb {\in} \Fb_{\reaset} \}$.
It has been proven that $\dsvw(\pv) = \{\reaset | \prob(\reaset) \geq \prob(-\reaset)\}$ and the decision accuracy of $\wmv$ is (Refer to ~\cite{nitzan:1982optimal,berend:2015MPR-NIPS,muller:2020MPR}):

\begin{equation}\label{eq:accuracy}
\begin{aligned}
\prob(\wmv(\Fv) = \corre) & = \!\! \sum_{\reaset: \reaset \in \dsvw(\pv)} \prob(\reaset)\\
&= \!\! \sum_{\reaset : \prob(\reaset) \geq \prob(-\reaset)} \!\! \left( \prod_{i:\rea_i {=} 1} \!\! p_i \cdot \!\!\! \prod_{i:\rea_i {=} -1} \!\! (1{-}p_i) \right)
\end{aligned} 
\end{equation}


Equation~\ref{eq:accuracy} indicates that the accuracy of WMV is determined by the probabilities of indicator vectors, which depend on the trustworthiness values of the sources.

In Example~\ref{exp1}, if $\pv= (0.6,0.6,0.9)$, then $\dsvw (\pv) = \{(1,1,1), \\(-1,1,1), (1,-1,1), (-1,-1,1)\}$ and the decision accuracy is 0.9 (i.e., a.l.a source $s_3$ reports correctly).

WMV has been proved to be optimal when \trustworthiness{} $\pv$ and the log weight function are used for decision making and the sources are independent in providing feedback~\cite{nitzan:1982optimal}. 

In practice, when trustworthiness is unknown, the weight assigned to each source depends on the \trust{}, that is, $w_i=\log (\nicefrac{\pest_i}{1 - \pest_i})$. Besides, the decision maker computes the probabilities of indicator vectors with trust values, which we use the subscript $\prob_{\pestv}$ to distinguish from their actual probabilities: $\prob_{\pestv}(\reasetv = \reaset) = \prod_{i:\rea_i {=} 1} \pest_i \cdot \prod_{i:\rea_i {=} -1} (1{-}\pest_i)$. 
With $\prob(\reaset)$ replaced by $\prob_{\pestv}(\reaset)$, the decisions would always be correct for those indicator vectors which the decision maker thinks are more probable than their opposite, namely $\dsvw(\pestv)= \{\reaset | \prob_{\pestv}(\reaset) \geq \prob_{\pestv}(-\reaset)\}$.
As a result, $\dsvw(\pestv)$ and $\dsvw(\pv)$ may be different.
In Example~\ref{exp1}, if $\pv = (0.6,0.6,0.9)$ and $\pestv = (0.8,0.6,0.8)$, then $(-1,-1,1) \in \dsvw(\pv)$ while its opposite indicator vector $(1,1,-1) \in \dsvw(\pestv)$. 
This may result in different decision accuracy.
We introduce $\dscpep$ to distinguish: 

\begin{equation}\label{eq:accuracy-pest}
\begin{aligned}
\prob(\wmv(\Fv) = \corre) &\triangleq \dscpep = \sum_{\reaset \in \dsvw(\pestv)} \prob(\reaset)
\end{aligned}
\end{equation}
The first parameter of the function $\omega()$ represents the value used for decision making, and the second parameter represents the value used to compute the probability of deciding correctly.
For $\dscpep$, decisions are made using trust values $\pestv$, while the decision accuracy that the decision maker actually obtains still depends on source trustworthiness, which challenges the optimality of WMV.

Generally, both the parameters of $\omega()$ can be either trust or trustworthiness, and we assume that the parameter (either trust or trustworthiness) used for decision-making is at least 0.5. 
Trust values are, by definition, known to the decision-maker. 
Therefore, it’s reasonable to apply the assumption for trust, meaning ignoring sources with trust below 0.5.
For trustworthiness, we assume it is at least 0.5 only when it is used to decide (e.g., in Section~\ref{sub:DSA}), and otherwise, its value ranges from $(0,1)$ (e.g., in Section~\ref{sub:Trustworth SA}, ~\ref{sub:Trust SA} and  ~\ref{sec:stability}).


Depending on what we equip the parameters with, trust or trustworthiness, we will obtain different meanings for decision accuracy as follows.
The quantity $\dscpp$ denotes the “ideal” decision accuracy, where the decision maker knows and uses the trustworthiness values to decide and compute.
The quantity $\dscpep$ denotes the "practical" decision accuracy, where the decision maker decides with the trust values $\pestv$, but the accuracy he actually achieves depends on \trustworthiness{}.
The quantity $\dscpepe$ denotes the ``perceived'' decision accuracy that the decision maker thinks he can obtain (decides and computes accuracy with trust), while the actual accuracy may not equal $\dscpepe$.

\section{Parameter Sensitivity}\label{sec:sensitivity}
In this section, we analyze how changes in the values of trust and trustworthiness influence the decision accuracy or the \correctness{} of WMV.
There are several ways:
1) how the decision accuracy changes when the trustworthiness and trust change simultaneously; 
2) how the decision accuracy changes with trustworthiness when trust remains constant; 
3) how the decision accuracy changes with trust when trustworthiness remains constant. 
If the changes show relatively little effect on the \correctness{}, then we can say that WMV is not very sensitive to the parameters.
Sensitivity relates to stability, the analysis in this section provides several important insights for the analysis in the next section.

We will also take numerical analysis based on the setting in the following running Example~\ref{exp} to further illustrate the theories.

\begin{myexp}\label{exp}
There are four sources $\sources {=} \{s_1,s_2,s_3,s_4\}$ and their trustworthiness values are $\pv=(0.8,0.75,0.7,0.6)$ respectively.
\end{myexp}

\subsection{Direct Sensitivity Analysis}\label{sub:DSA}
Here, we analyze the case where the parameters used for making decisions and that for computing accuracy are equal.
There are two different rationales for doing this, but the mathematics is identical for both.
First, consider that the decision maker is given the actual \trustworthiness{} values to make decisions.
Second, consider analyzing the sensitivity of the beliefs of the decision maker.
Assume the decision maker only knows \trust{} values, and uses them to compute their belief about how probable a decision is correct.

For simplicity, we use \trustworthiness{} everywhere, but the analysis remains unchanged when using \trust{} instead (simply put a hat on all $p$'s and $\pv$'s). Observe that if \trustworthiness{} of only $1$ source varies, then decision accuracy would appear to be a piecewise linear non-decreasing convex function.
In Figure~\ref{Fig:MPR Property One_Dim}, we depict Example~\ref{exp}, with each plot representing a source trustworthiness variable.
\begin{lemma} \label{lemma:WMV}
	Let $f(p_i) = \dscpp$, where $p_j$ is constant for $j \neq i$.
	The function $f(p_i)$ is a piecewise linear non-decreasing convex function.
\end{lemma}
\begin{proof}[Sketch of Proof]
The computation for correctness of a decision can be characterized as $p_i \cdot x + (1-p_i) \cdot y$, where $x - y \geq 0$ and this coefficient increases with $p_i$ increasing.
Each decision based on corresponding $\dsvw(\pv)$ represents a non-decreasing line.
\end{proof}

Generally, if a source is more probable to be trustworthy, the decision will be better and improved faster.
Figure~\ref{Fig:MPR Property One_Dim} illustrates Lemma~\ref{lemma:WMV}.
The accuracy of WMV is determined by comparing the $\prob(\reaset)$ and $\prob(-\reaset)$ for all the indicator vectors.
The relation between $\prob(\reaset)$ and $\prob(-\reaset)$ either remains or changes, depending on the value of $p_i$ and how much it changes.
Intuitively, this results in the piece-wise characteristics of $f(p_i)$.
Moreover, we can vary \trustworthiness{} values of multiple (or even all) sources. If we vary $k$ \trustworthiness{}, then we get an $k$-dimensional piecewise surface.
In Figure~\ref{Fig:MPR Property Two_Dims}, we depict our running example with $p_1$ and $p_2$ on the two axes, and $p_3$ and $p_4$ remaining constant.
The surface appears like a collection of intersecting planes, but 
in fact, the graph consists of surfaces described by a polynomial, rather than a linear one.



\begin{figure}[htbp] 
	\centering 
	\subfigure[One Source $p_i$]{
		\includegraphics[width=0.48\linewidth]{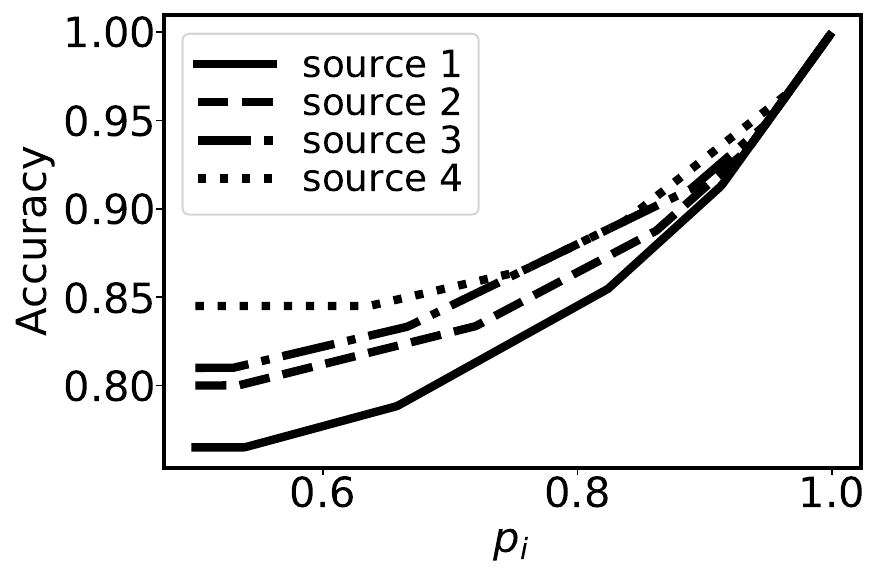}\label{Fig:MPR Property One_Dim}}
	\subfigure[Two sources $p_1$,$p_2$]{
		\includegraphics[width=0.48\linewidth]{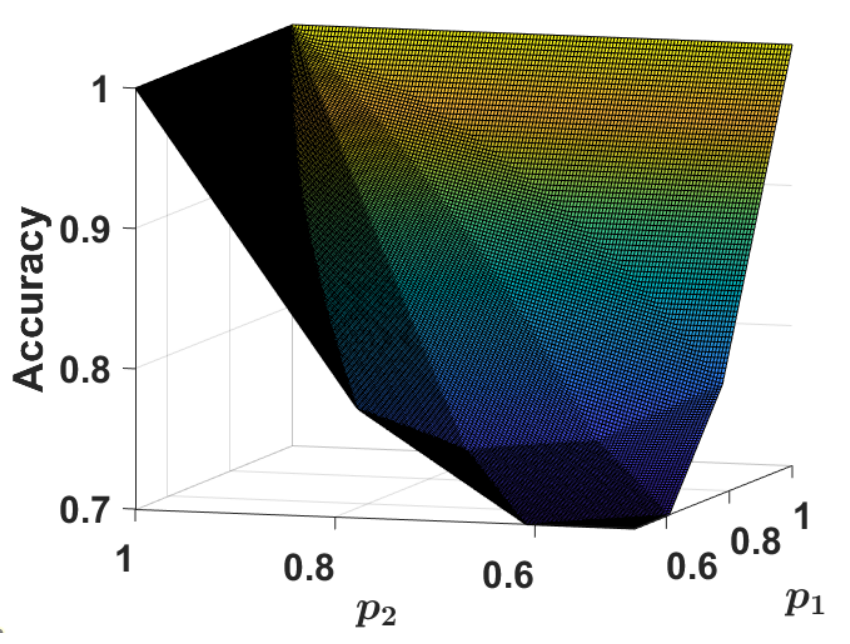}\label{Fig:MPR Property Two_Dims}}
	\caption{Sensitivity of $\wmv$ to $\pv$ when $\pv = \pestv$}
	\label{fig:MPR_Property}
\end{figure}

In Lemma \ref{lemma:WMV}, we assume that trustworthiness of all the sources remains constant and independent. However, sources can collude to influence decisions, or one can update the trust values of multiple sources at a time. For such situations, we assume the trustworthiness values of multiple sources are consistently equal, meaning they are not independent.

\begin{lemma} \label{property of m-identical Weighted Majority Voting}
In the special case of the identical sources, let $f(p)=\dscpp$, where $p_1=\dots=p_m=p$, $m\leq n$ and $p_j$ are constant, $j > m$.
The function $f(p)$ is a piecewise non-decreasing function, and it is concave (linear or strictly concave) in each segment.
\end{lemma}

\begin{proof}[Sketch of Proof]
The computation for correctness can be characterized as a summation: $\dscpp = \sum_i g_i(p)$. For each $g_i(p)$, it meets piecewise non-decreasing property, and concave property in each segment. Thus, $\dscpp$ also holds.
\end{proof}

Note that in Lemma~\ref{property of m-identical Weighted Majority Voting}, if $m {=} n$, meaning all the sources are identical, WMV becomes the classical Majority Voting decision rule and $f(p)$ becomes a concave monotonically increasing function ~\cite{Boland:1989_Majority_property}.

Figure \ref{Fig:MPR Property m identical} shows how the decision accuracy $\dscpp$ changes with varying $p$ and $m$ values where there are originally $2$ sources with the same $rest\_{}p=0.7$, then $m$ identical sources join with their $p \geq 0.5$ and $n=m+2$.
Given $m$ value, $f(p)$ increases piecewisely with $p$, and specifically in each segment, it is concavely increasing. 
Given $p$ value, $f(p)$ increases monotonically with $m$.
In Figure \ref{Fig:MPR Property m identical vary rest p}, we fix $n=10,m=6$ and vary the trustworthiness of the $m$ identical sources $p$ and the rest sources $rest\_{}p$. 
This figure illustrates that even the rest sources are in minority, but the higher $rest\_{}p$ is, the more insensitive the decision to $p$ is. 
Besides, it demonstrates that when the trustworthiness of multiple sources updates in a particular way, the variation characteristic of the decision accuracy may be captured and described.

\begin{figure}[htbp]
	\centering {}
	\subfigure[$rest\_p=0.7$]{
		\includegraphics[width=0.48\linewidth]{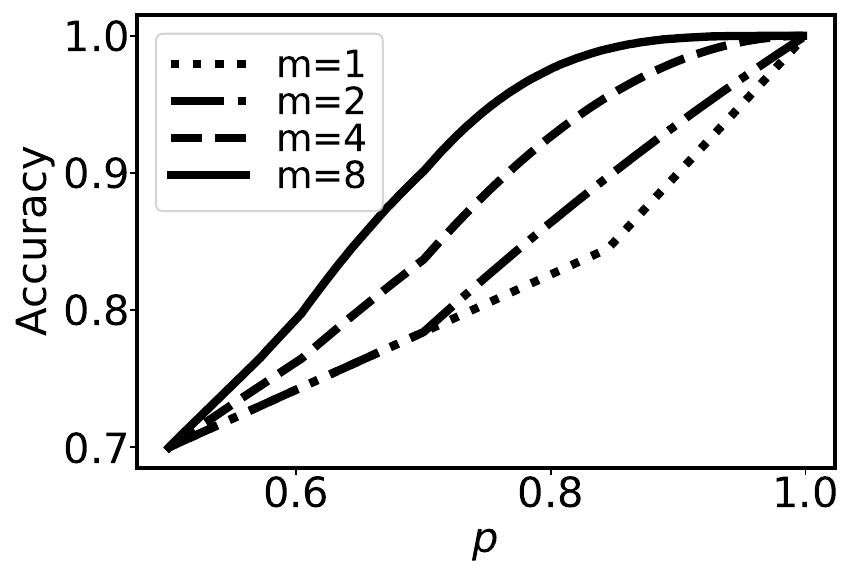}\label{Fig:MPR Property m identical}}
	\subfigure[$n=10$]{
		\includegraphics[width=0.48\linewidth]{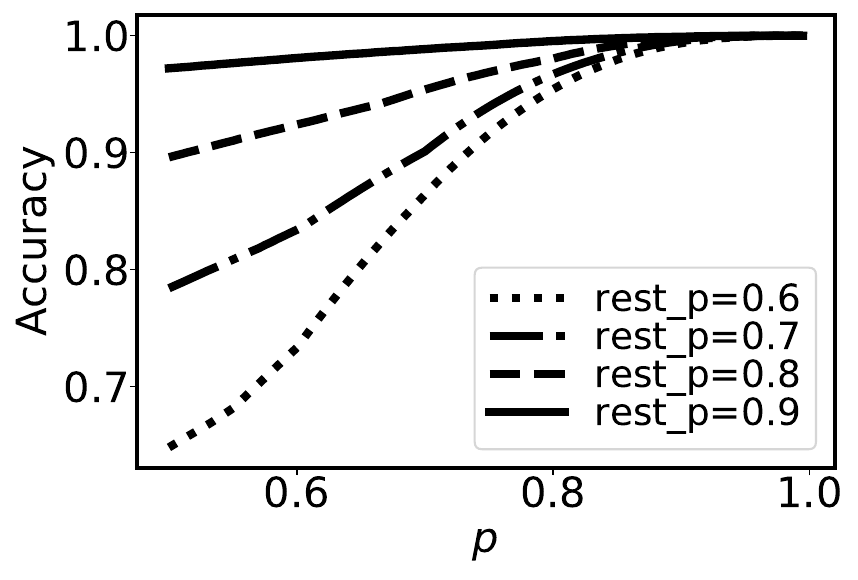}\label{Fig:MPR Property m identical vary rest p}}
	\caption{Accuracy of $\wmv$ with $m$ sources being identical}
	\label{fig:Functional property with m sources identical}
\end{figure}


\subsection{Trustworthiness Sensitivity Analysis}\label{sub:Trustworth SA}
Next, we analyze the cases where \trustworthiness{} and \trust{} are not identical.
The decisions are made based on the trust values $\pestv$, while the probability of each indicator vector is determined by $\pv$.
The probability of deciding correctly is $\dscpep$.
In this section, \trustworthiness{} varies with \trust{} value fixed.
Recall Equation~\ref{eq:accuracy-pest}, this means that decisions remain unchanged for given feedback (as the set $\dsvw(\pestv)$ remain unchanged), while decision accuracy $\dscpep$ may change with trustworthiness.

If only one parameter $p_i$ varies in $\dscpep$, then the resulting decision accuracy is a non-decreasing function, which follows trivially from the proof of Lemma~\ref{lemma:WMV}. In fact, the line corresponds to one of the line segments from the piece-wise linear graph from the previous section as depicted in
Figure~\ref{Fig:Stability on Inaccurate Estimate, Est_P Fixed One_Dim}. Besides, we can have multiple variables as before.
The surface obtained is non-decreasing and polynomial.
The surface corresponds to one of the fragments from the graph discussed in the previous section.
A 2d example is depicted in Figure~\ref{Fig:Stability on Inaccurate Estimate, Est_P Fixed Two_Dims}. 
The result shows that the decision accuracy has a unique continuous differentiable function, rather than a piecewise function with different functions in different segments.

\begin{figure}[htbp]
	\centering
	\begin{minipage}[c]{0.48\textwidth}
		\centering 
		\subfigure[One Source $p_i$]{
			\includegraphics[width=0.48\linewidth]{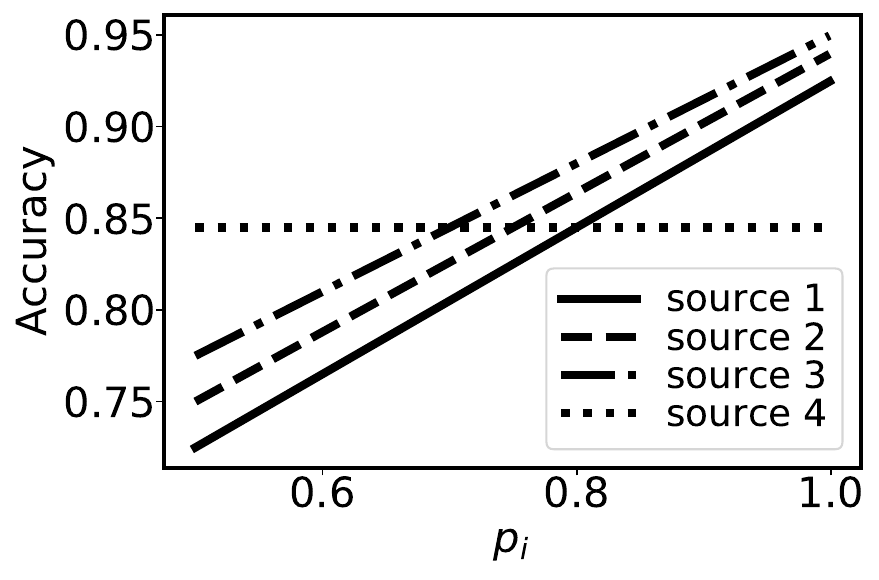}\label{Fig:Stability on Inaccurate Estimate, Est_P Fixed One_Dim}}
		\subfigure[Two sources $p_1$,$p_2$]{
			\includegraphics[width=0.48\linewidth]{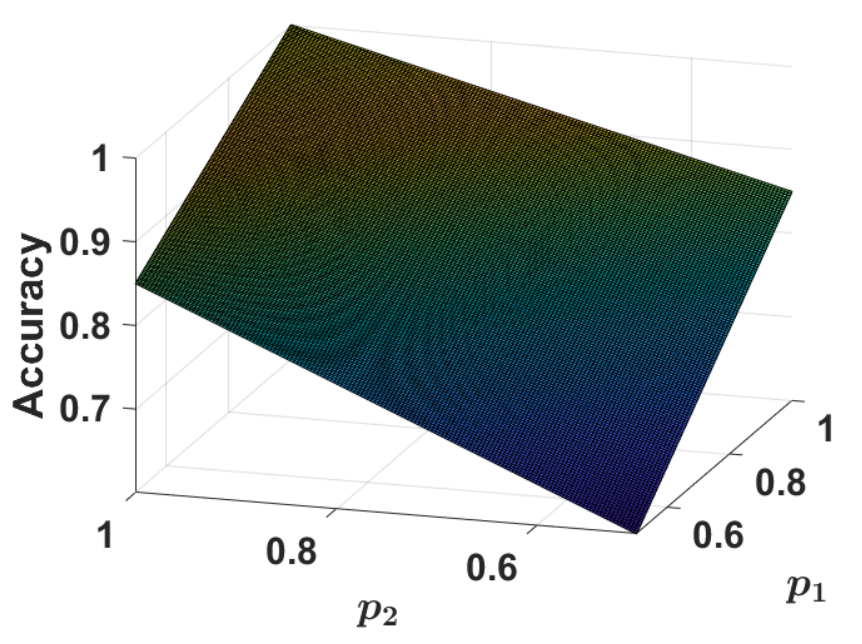}\label{Fig:Stability on Inaccurate Estimate, Est_P Fixed Two_Dims}}
	\end{minipage}
	\caption{Sensitivity of $\pv$ with $\pestv$ fixed}
	\label{fig:Stability on Inaccurate Estimate with Est_P Fixed}
\end{figure}

\subsection{Trust Sensitivity Analysis} \label{sub:Trust SA}
Alternatively, we can take \trust{} to be variable and \trustworthiness{} to be fixed.
Different from before, the actual probability of each indicator vector ($\prob(\reaset)$) now remains unchanged, but the decisions may change (as $\prob_{\pestv}(\reaset)$ and accordingly $\dsvw(\pestv)$ may change).
Then we can analyze what happens if a \trust{} value used for decision making moves away from the actual \trustworthiness{} in either direction.

First, consider the case where we vary the trust value of only one source in $\dscpep$.
We get a uni-modal discontinuous staircase function, which is non-decreasing when $\pest_i < p_i$ and non-increasing when $\pest_i > p_i$.
Figure~\ref{Fig:Stability on Inaccurate Estimate, True_P Fixed One_Dim} depicts Example~\ref{exp} with one variable.
Second, when trust values of multiple sources are variable, the resulting surface consists of flat fragments at different heights, with an increasing height with proximity to the point $\pestv = \pv$.
Figure~\ref{Fig:Stability on Inaccurate Estimate, True_P Fixed Two_Dims} depicts Example~\ref{exp} with $\pest_1$ and $\pest_2$ being the variables.
Generally:
\begin{lemma} \label{lemma-Stability on Inaccurate Estimate}
Let $f(\pestv)=\dscpep$, where the \trustworthiness{} $\pv$ is constant. The function $f(\pestv)$ is a discontinuous staircase function consisting of flat plateaus. Decision accuracy reaches the maximum at the plateau containing the point $\pestv = \pv$. 
\end{lemma}
\begin{proof}[Sketch of Proof]
The probability that a decision is correct depends on $\pv$, which is constant.
Changing $\pestv$ does not affect the probability that a decision is correct, until it reaches a point where it changes the actual decision away from the optimum.
Then, there is a discontinuous step to a new platform.
\end{proof}
An insight is that the nearby points are more likely to be on the same plateau.
In other words, there is an area of \trust{} values around the \trustworthiness{} values, meaning small estimation deviation may be unlikely to affect the accuracy.
However, it is possible that a certain \trustworthiness{} $\pv$ is exactly at a border (or corner) of a plateau, meaning that even a tiny difference between \trustworthiness{} and \trust{} 
can lead to a staircase difference in correctness.
The positive news is that the plateaus directly bordering the one containing $\pv$ are still more often correct than the ones further away.

Besides, while both underestimation and overestimation cause wrong judgment on $\prob(\reaset)$ vs. $\prob(-\reaset)$, the numerical (Figure \ref{fig:Stability on Inaccurate Estimate with True_P fixed}) results imply that overestimation perhaps results in the worse accuracy degradation compared with underestimation. 
Our intuition is that, if there is a high $p$-valued source, then that source tends to have a lot of sway on the vote, so any inaccuracies will be noticeable, whereas a low $p$-valued source tends to only matter in cases where the vote is tight, and thus any inaccuracies tend to matter less.
From a micro perspective, the underlying reason might be that overestimation of a trustworthiness value makes the difference $\left|\prob(\reaset)-\prob(-\reaset)\right|$ also overestimated, while for underestimation, the difference would be underestimated.
Therefore, when the estimation error is typically inevitable, it is better to underestimate trustworthiness.

\begin{figure}[htbp] 
	\centering 
	\begin{minipage}[c]{0.48\textwidth}
		\centering{}
		\subfigure[One Source $\pest_i$]{
			\includegraphics[width=0.48\linewidth]{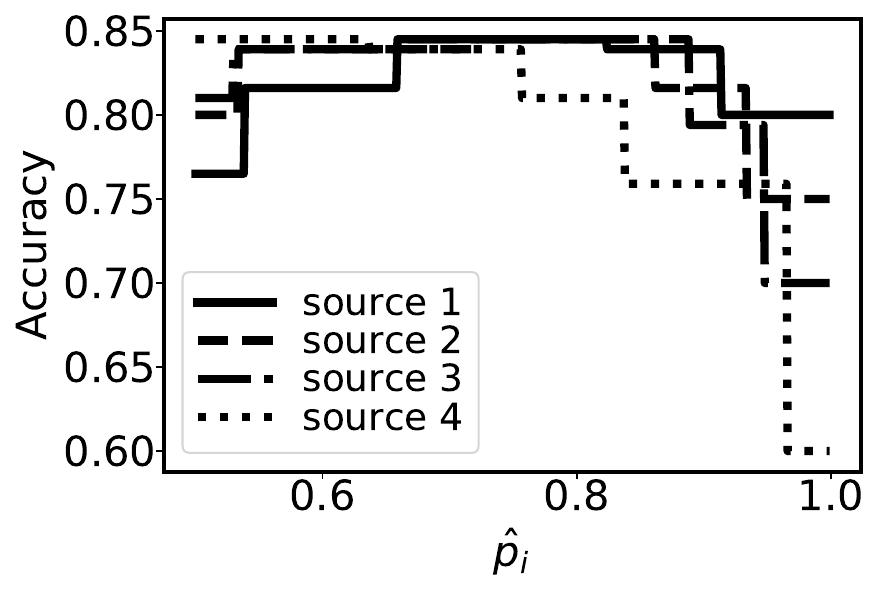}\label{Fig:Stability on Inaccurate Estimate, True_P Fixed One_Dim}}
		\subfigure[Two sources $\pest_1$,$\pest_2$]{
			\includegraphics[width=0.48\linewidth]{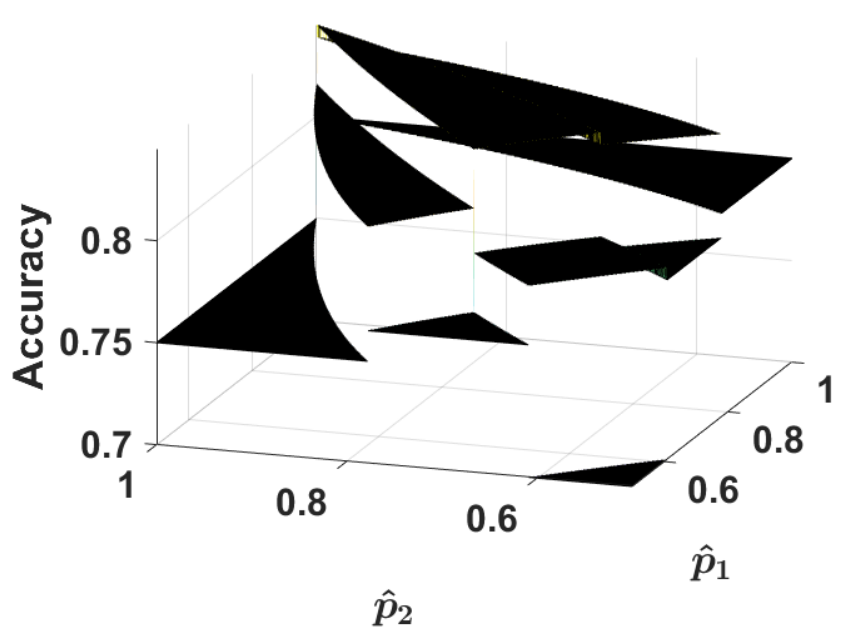}\label{Fig:Stability on Inaccurate Estimate, True_P Fixed Two_Dims}}
	\end{minipage}
	\caption{Sensitivity of $\pestv$ with $\pv$ fixed}
	\label{fig:Stability on Inaccurate Estimate with True_P fixed}
\end{figure}

\section{Stability}\label{sec:stability}

The results of the Parameter Sensitivity Section are unsurprising.
Increasing trustworthiness typically increases \correctness{}, and cannot decrease \correctness{}.
Hence, if a source is believed to decide correctly with probability $\pest$ , while its \trustworthiness{} $p < \pest$, then the actual \correctness{} achieved is lower than what the decision maker believes: $\dscpep {<} \dscpepe$.
Vice versa when $p > \pest$.
Our suspicion is that these two effects cancel each other out, if the algorithm that establishes \trust{} is not biased towards overly trusting or being suspicious on average.
We call this property \emph{Stability of Correctness}, and prove it absolutely holds for WMV.

A better procedure to obtain \trust{} returns values closer to the \trustworthiness{} values, with little variance, meaning what is believed about the sources is close to the ground truth.
The quality of the procedure does not affect the Stability of Correctness at all when it is unbiased, which may initially seem counter-intuitive. However, another property captures the idea that even when it's unbiased, poor \trust{} values still result in worse performance of WMV, \emph{Stability of Optimality}.
We prove Stability of Optimality does not hold absolutely, but that drop in the performance is bounded.

Beforehand, we need to formally define what we mean by an algorithm or procedure to establish \trust{} values, and by it being unbiased (on average). 

\subsection{Parameter Distributions} \label{sub:para dist}
We introduce random variables for our parameters.
For \trustworthiness{}: $\pdis_i$ is a random variable with outcome $p_i \in [0,1]$, and $\pdisv$ is a joint random variable with outcome $\pv = (p_1, \dots, p_n)$.
Similarly, for \trust{}: $\pestdis_i$ is a random variable with outcome $\pest_i \in [0,1]$, and $\pestdisv$ is a joint random variable with outcome $\pestv = (\pest_1, \dots, \pest_n)$.
The uncertainty of source \trustworthiness{} may be due to lack of behavior consistency, or experience, so the sources can not provide stable-quality feedback. On the other hand, inadequate interaction with sources or inaccurate modeling by decision maker may incur uncertain \trust{} estimation.

Weighted Majority Voting requires a weight for each source which is determined by $\pest_i$ (the outcome of $\pestdis_i$).
Practical usage of WMV, therefore, must have some algorithms to arrive at values for $\pestv$.
Depending on the quality of the algorithm, there is a degree of correlation between trust and trustworthiness: $\pestdisv$ and $\pdisv$. We consider the procedure to get the trust values $\pestv$ as \emph{unbiased} when the expectation of trustworthiness equals the trust value: $\expect(\pdisv)= \pestv$.
Hence, if an unbiased trust value $\pest_i$ is $0.7$, then the trustworthiness $\pdis_i$ can sometimes be greater or smaller than $0.7$. 
Note that this is a reasonable assumption for various machine learning-based procedures or Bayesian learning in particular.
In reality, we cannot guarantee that any machine learning method is completely free of such bias, but the unbiased case is interesting to study, and we expect any residual bias to be fairly small, if the algorithm is configured using sufficient empirical data.

We extend our definition of $\dsc$ to accept random variables as parameters. In that case, the output of $\dsc$ is a distribution over accuracy. The expectation of such decision accuracy is:
\begin{equation}
    \expect(\dscpedispdis) = \sum_{\pestv,\pv}\prob(\pestdisv = \pestv, \pdisv = \pv) \dscpep
\end{equation}
Besides, there can be an ideal situation where "magically" the decision maker knows the actual trustworthiness variable (i.e., $\pdisv$), and can use it to make decisions. The expected probability of making correct decision is, 
\begin{equation}
    \expect(\dscpdispdis) = \sum_{\pv} \prob(\pdisv = \pv) \dscpp
\end{equation}

\subsection{Stability of Correctness}\label{sub:SoC}
In this section, we do not care about what the distribution of $\pdisv$ actually looks like,
as long as $\expect(\pdisv) = \pestv$, meaning the trust values used for decision making are unbiased.
The main result is that in this case, the decision accuracy that $\wmv$ is believed to achieve by the decision maker, equals the probability that the decision is actually correct.
This is an important positive result, that supports the idea of using WMV in practice.
Decision makers are not delusional about the correctness of their decisions.
Formally, we define the property of Stability of Correctness (SoC) as:

\begin{theorem}{Stability of Correctness (SoC)}: \label{Theorem:SoC1}
For WMV, if $\pestv = \expect(\pdisv)$, then $\expect(\dscpepdis) - \dscpepe = 0$.
\end{theorem}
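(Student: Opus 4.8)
The plan is to exploit the fact that, once the estimate $\pestv$ is fixed, the decision set $\dsv(\pestv)$ selected by the scheme is determined by $\pestv$ alone and does \emph{not} depend on the realised true vector $\pdisv$. All of the randomness in $\dsc(\pestv,\pdisv)$ then enters only through the probabilities used to \emph{evaluate} a fixed collection of winning realisations, never through which realisations win. This separates the present result sharply from Lemma~\ref{lemma-Stability on Inaccurate Estimate}, where it is the \emph{argument} of the decision set that varies, and reduces the theorem to a statement about how expectation interacts with the probability mass of a fixed realisation set.

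Concretely, I would first expand the correctness, exactly as in Lemma~\ref{lemma:WMV one-dimension}, as the probability mass of the fixed set under the true vector:
\begin{IEEEeqnarray}{rCl}
\dsc(\pestv,\pdisv) &=& \sum_{\reaset \in \dsv(\pestv)} \prob_{\pdisv}(\reaset), \qquad \prob_{\pdisv}(\reaset) = \prod_{i:\,\rea_i=+1} P_i \prod_{i:\,\rea_i=-1}(1-P_i).
\end{IEEEeqnarray}
The key structural observation is that each $\prob_{\pdisv}(\reaset)$ is \emph{multilinear} in $\pdisv$, i.e.\ affine of degree at most one in each coordinate $P_i$ separately; hence the whole sum, with $\dsv(\pestv)$ held constant, is a multilinear function of $\pdisv$, and for a multilinear function the expectation coincides with the value at the mean.

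Next I would push the expectation through the finite sum by linearity and then factorise coordinatewise. Using multilinearity together with independence of the coordinates of $\pdisv$,
\begin{IEEEeqnarray}{rCl}
\expect(\prob_{\pdisv}(\reaset)) &=& \prod_{i:\,\rea_i=+1}\expect(P_i)\prod_{i:\,\rea_i=-1}(1-\expect(P_i)) = \prob_{\pestv}(\reaset),
\end{IEEEeqnarray}
where the last equality inserts the hypothesis $\pestv = \expect(\pdisv)$, that is $\pest_i = \expect(P_i)$ in each coordinate. Summing back over $\reaset \in \dsv(\pestv)$ then yields $\expect(\dsc(\pestv,\pdisv)) = \sum_{\reaset \in \dsv(\pestv)}\prob_{\pestv}(\reaset) = \dsc(\pestv,\pestv) = \dsc(\pestv)$, where I read the single-argument $\dsc(\pestv)$ as $\dsc(\pestv,\pestv)$ in accordance with the convention of Lemma~\ref{lemma:WMV one-dimension}; this is precisely the claimed identity.

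The main obstacle is justifying the coordinatewise factorisation: multilinearity alone gives $\expect(g(\pdisv)) = g(\expect(\pdisv))$ only when the coordinates of $\pdisv$ are independent, so this step genuinely relies on the model assumption that the sources' \trustworthiness{} distributions are independent across sources. With that assumption the remaining manipulations are routine bookkeeping; the only other point needing care is keeping $\dsv(\pestv)$ frozen throughout, since it is exactly the constancy of the decision set that renders the correctness multilinear in $\pdisv$ and hence makes its expectation equal to its value at the mean.
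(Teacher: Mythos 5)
Your proposal is correct and follows essentially the same route as the paper's own proof: freeze the decision set $\dsv(\pestv)$, expand the correctness as a sum over the fixed winning realisations, and use linearity of expectation together with independence of the coordinates of $\pdisv$ to factor $\expect(\prob(\reaset))$ into $\prod_{i:\rea_i=1}\pest_i\prod_{i:\rea_i=-1}(1-\pest_i)$. Your framing via multilinearity is a slightly more explicit articulation of why the expectation passes to the value at the mean, but the underlying argument is identical.
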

\begin{proof}[Sketch of Proof]
It follows from the fact in Section \ref{sub:Trustworth SA} that the indicator vector set $\dsvw(\pestv)$ where decisions are supposed to be correct remains unchanged, when $\pestv$ is unchanged. Also consider the fact that each $\pdis_i$ is independently distributed.
\end{proof}

We show the results of two Monte Carlo simulations with $100,000$  runs over Example \ref{exp} to demonstrate the effect of distribution variance on the expected correctness of an unbiased estimate. 
In Figure~\ref{Fig:Stability on Correctness, Est_P Fixed}, we depict $\dscpepe$ and $\expect(\dscpepdis)$, where trustworthiness $\pdisv$ is a Beta distribution
with expected value $\pestv$ equal to trust (unbiased) 
and a variance set by the $x$-axis. 
This figure shows the variance of the trustworthiness $\pdisv$ has no effect on the correctness on average, which confirms our theorem.
In contrast, in Figure~\ref{Fig:Stability on Correctness, True_P Fixed}, we depict $\dscpp$ and $\expect(\dscpedisp)$, letting the trust be the quantity being a random variable, distributed around trustworthiness with increasing variance.
Unsurprisingly, this figure shows that a more divergent trust distribution leads to lower average correctness $\expect(\dscpedisp)$ since the trust is more likely to be far away from trustworthiness and results in accuracy degradation.
Furthermore, $\expect(\dscpedisp)$ can never exceed $\dscpp$,  in line with the conclusion of section \ref{sub:Trust SA}.
In the next section, we will study this case further.

\begin{figure}[htbp] 
	\centering 
	\subfigure[$\pestv$ fixed, $\pdisv$ probabilistic]{
		\includegraphics[width=0.48\linewidth]{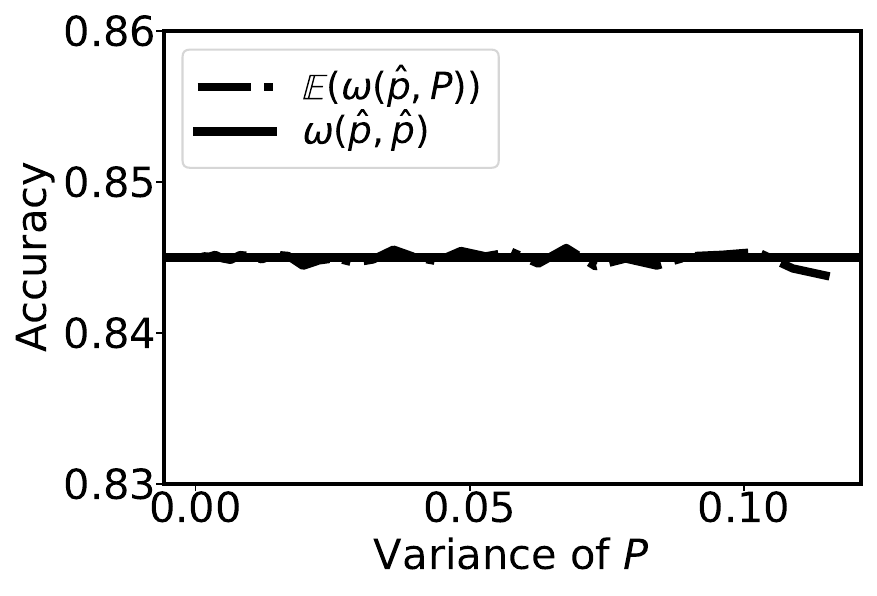}\label{Fig:Stability on Correctness, Est_P Fixed}}
	\subfigure[$\pv$ fixed, $\pestdisv$ probabilistic]{
		\includegraphics[width=0.48\linewidth]{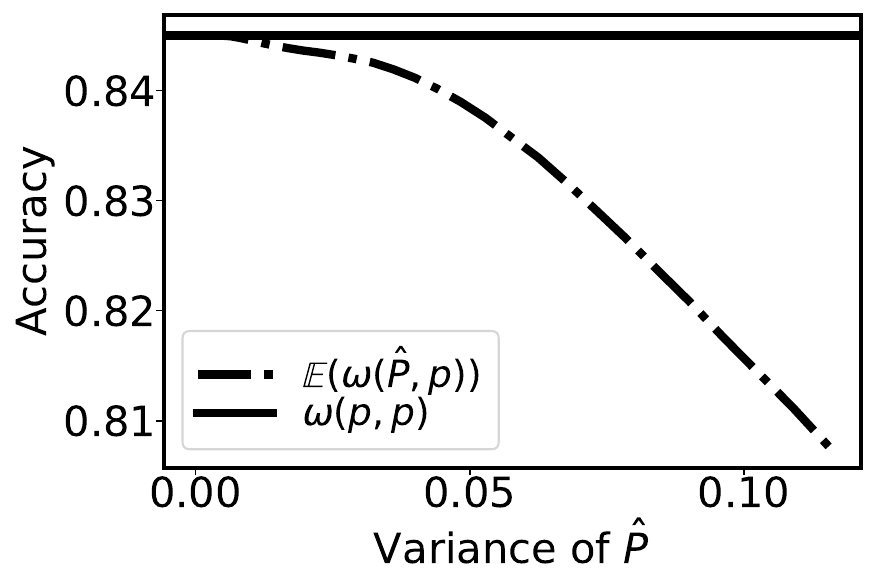}\label{Fig:Stability on Correctness, True_P Fixed}}
	\caption{Effect of variance on Stability of Correctness}
	\label{fig:Effect of Variance on Stability on Correctness with Unbiased estimation}
\end{figure}

\subsection{Stability of Optimality} \label{sub:SoO}
Although for Stability of Correctness the shape (and variance) of the trustworthiness distribution was irrelevant, intuitively a distribution with less variance should still be better for the decision maker.
We introduce another stability property in this section to capture this idea: \emph{Stability of Optimality} (SoO).
Formally, it means whether decisions made with the trustworthiness variables revealed are as good as those made only with the trust values available.
We formally capture this gap with the definition below:
\begin{equation}\label{eq:SoO difference}
    SoO(\pdisv) = \expect(\dscpdispdis) - \expect(\dscpepdis)
\end{equation}
In other words, it also measures compared with using trust to decide, how much the decision accuracy can be improved, if trustworthiness values are available.
Note that an equivalent (via Theorem~\ref{Theorem:SoC1}) formulation is: $\expect(\dscpdispdis) - \dscpepe$, when $\pestv = \expect(\pdisv)$.

To analyze Stability of Optimality formally, we introduce some definitions.
Assume the trustworthiness $p_i$ is bounded in some range, e.g. $\forall i, a_i \leq p_i \leq b_i$. 
Denote the value space of $\pv$ as Hypercube $\mathbb{H}$, $\pv \in \mathbb{H}$. 
The set of vertexes of the Hypercube is denoted as Vertex Space $\mathbb{Q}$, where each vertex $\bm{q} \in \mathbb{Q}$ and $q_i$ is either $a_i$ or $b_i$.
Defined in the hypercube, the distribution of $\pdisv$ with expectation $\pestv$ can be arbitrary. 
We name an \emph{extreme distribution} for random variables $\pdisv$ in the vertex space $\mathbb{Q}$ of the hypercube, where $\prob(\pdis_i=a_i) = \frac{b_i-\pest_i}{b_i-a_i}$, $\prob(\pdis_i=b_i) = \frac{\pest_i-a_i}{b_i-a_i}$.

When trustworthiness is revealed for decision-making, we observe that a high variance in trustworthiness is \emph{good} for accuracy, especially the extreme distribution. 
That is, when a source is more trustworthy than the average, increasing its weight enhances overall decision accuracy. Conversely, when a source is less trustworthy than the average, it can degrade decision quality to some extent, but the impact is mitigated by reducing the weight of this source.
In other words, it's better to have a $50\%$ chance for a source with $P = 0.9$ and $50\%$ for $P = 0.5$, than a source with $p = 0.7$. Formally,

\begin{lemma} \label{lemma-max_distribution}
Take random variables $\pdisv$ defined in a Hypercube with $\expect(\pdisv)=\pestv$. The correctness of $\expect(\dscpdispdis)$ is bounded by the extreme distribution:
\begin{equation}
\expect(\dscpdispdis) \leq \sum_{\bm{q}\in \mathbb{Q}}\left(\dscqq\prod_{i=1}^n\prob(\pdis_i=q_i)\right)
\end{equation}
\end{lemma}
\begin{proof}[Sketch of Proof]
Per Lemma \ref{lemma:WMV}, $\dscpp$ is convex in one dimension, and the extreme distribution maximizes $\expect(\dscpdispdis)$ in that dimension. 
The Lemma follows by independence of the trustworthiness variables.
\end{proof}
Lemma \ref{lemma-max_distribution} demonstrates that the decision accuracy is bounded (not always 100\%), even in the ideal situation where \trustworthiness{} is given, and it is determined by the distribution of \trustworthiness{}.
This is intuitive as more trustworthy sources should lead to better decisions.
Further, if trustworthiness is a constant rather than a random variable, Lemma \ref{lemma-max_distribution} still holds. That is:
\begin{corollary} \label{lemma-cube_correctness}
For any point $\pv$ in the Hypercube, $\dscpp$ is bounded by a linear combination of the correctness of the vertexes of the hypercube.
\begin{equation}
\dscpp \! \leq \frac{1}{\prod_{i=1}^{n}(b_i-a_i)} \!\! \sum_{\bm{q} \in \mathbb{Q}} \! \dscqq \!\!\! \prod_{i:q_i=a_i} \!\!\! (b_i-p_i) \!\!\! \prod_{i:q_i=b_i} \!\!\! (p_i-a_i)
\end{equation}
\end{corollary}
\begin{proof}
Let $\prob(\pdisv=\pv) = 1$ in Lemma~\ref{lemma-max_distribution}.
\end{proof}
\begin{figure}[htbp] 
	\centering 
	\subfigure[$\pest$ fixed, $P$ probabilistic]{
		\includegraphics[width=0.48\linewidth]{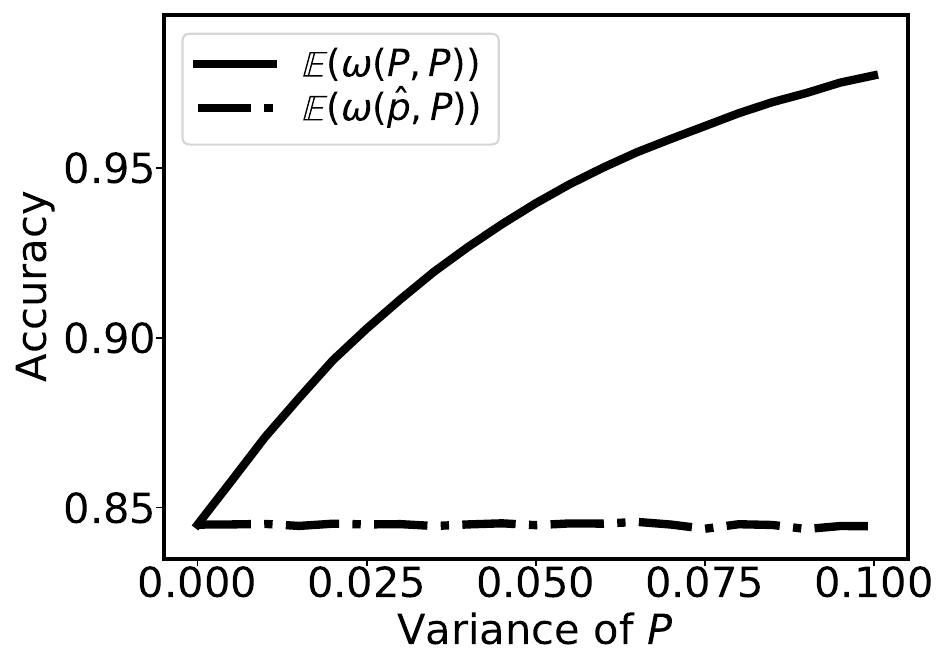}\label{Fig:Stability on Optimality, Est_P Fixed}}
	\subfigure[Example of Beta Distribution]{
		\includegraphics[width=0.48\linewidth]{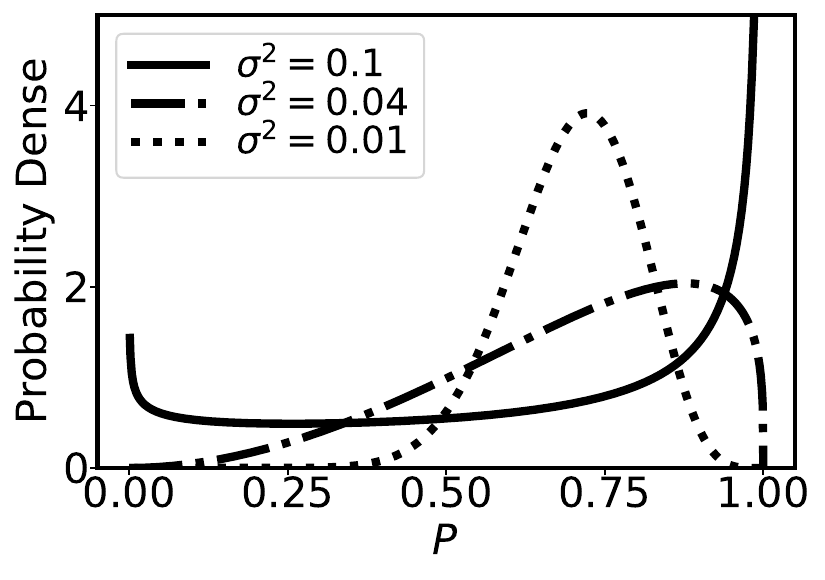}\label{Fig:Example Beta Distribution}}
	\caption{Effect of Variance on Stability of Optimality}
	\label{fig:Effect of Variance on Stability on Optimality with Unbiased estimation}
\end{figure}


Stability of Optimality does not strictly hold, as the gap (Equation \ref{eq:SoO difference}) is typically non-zero.
We prove upper bounds on the gap, which goes to $0$ as the distribution of trustworthiness becomes tighter.
Let $\delta$ quantify the size of the support of the distribution.

\begin{theorem}{Stability of Optimality:} \label{Theorem:SoO1}
If $\pestv = \expect(\pdisv)$ and all $\pdis_i$ have support $[\pest_i-\delta_i,\pest_i+\delta_i]$, then 
\begin{equation}
SoO(\pdisv) \leq \! \left(1-\dscpepe \right) \! \cdot \!  \left(1-\prod_{i=1}^{n}\left(1-\frac{1}{2}\cdot\frac{\delta_i}{1-\pest_i}\right)\right)
\label{eq:strong}
\end{equation}
A weaker but more intuitive bound is also derived using the Bernoulli Inequality,
\begin{equation}
SoO(\pdisv) \leq\frac{1-\dscpepe}{2}\sum_{i=1}^{n}\frac{\delta_i}{1-\pest_i}
\label{eq:weak}
\end{equation}
\end{theorem}

\begin{proof}[Sketch of Proof]
Via Lemma \ref{lemma-max_distribution}, we know the extreme distribution that maximizes $\expect(\dscpdispdis)$, relying on the correctness of the vertexes. 
Via Corollary \ref{lemma-cube_correctness}, the upper bounds for the correctness of vertexes can be obtained, only relying on $\dscpepe$. 
With some algebra, both bounds~\eqref{eq:strong} and \eqref{eq:weak} can be obtained.
\end{proof}

 
While there is a gap between making decisions based on unbiased trust and based on trustworthiness,
Theorem~\ref{Theorem:SoO1} proves that this gap is bounded by a relatively small threshold, implying that the unbiased trust would not reduce the decision quality too much. 
The upper bound is influenced by the distribution of trustworthiness, and converges towards zero with that variance reducing. 

To illustrate the effect of distribution variance on $SoO(\pdisv)$, we provide a Monte Carlo simulation with $100,000$ runs over Example \ref{exp}.
In Figure~\ref{Fig:Stability on Optimality, Est_P Fixed}, we measure $\expect(\dscpdispdis)$ and $\expect(\dscpepdis)$, where $\pestv$ is constant, and $\pdisv$ follows Beta distribution with increasing variance.
It presents that the larger the variance is, the larger $SoO(\pdisv)$ is, which validates the result of Lemma \ref{lemma-max_distribution}. 
To put the quantity of the variance in context, we provide examples of trustworthiness being Beta distributions with a certain variance in Figure~\ref{Fig:Example Beta Distribution}.

\begin{figure}[htbp] 
	\centering 
	\begin{minipage}[c]{0.48\textwidth}
		\centering{}
		\subfigure[Effect of $\delta$]{
			\includegraphics[width=0.48\linewidth]{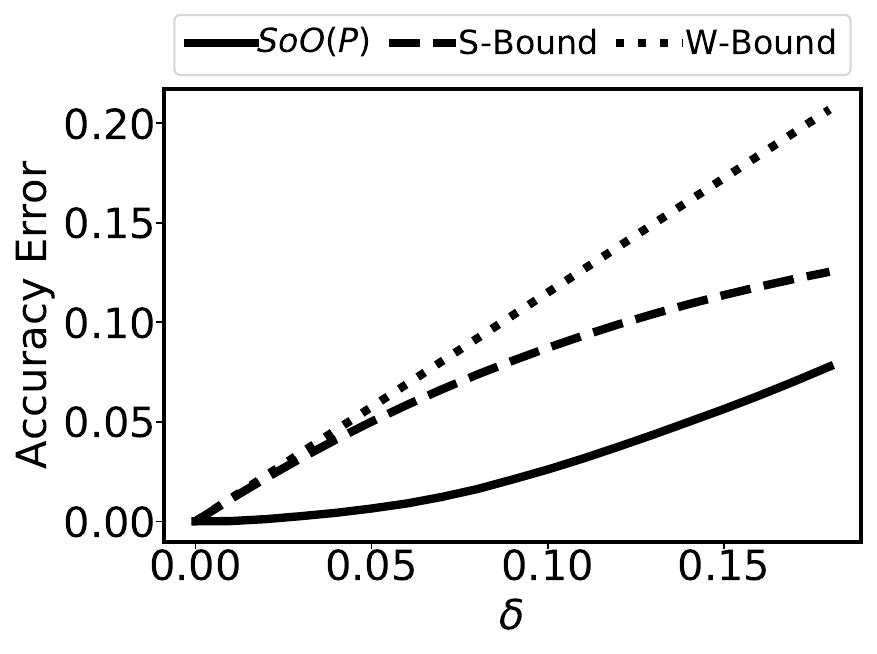}\label{Fig:SoO Parameter Delta}}
		\subfigure[Effect of $n$]{
			\includegraphics[width=0.48\linewidth]{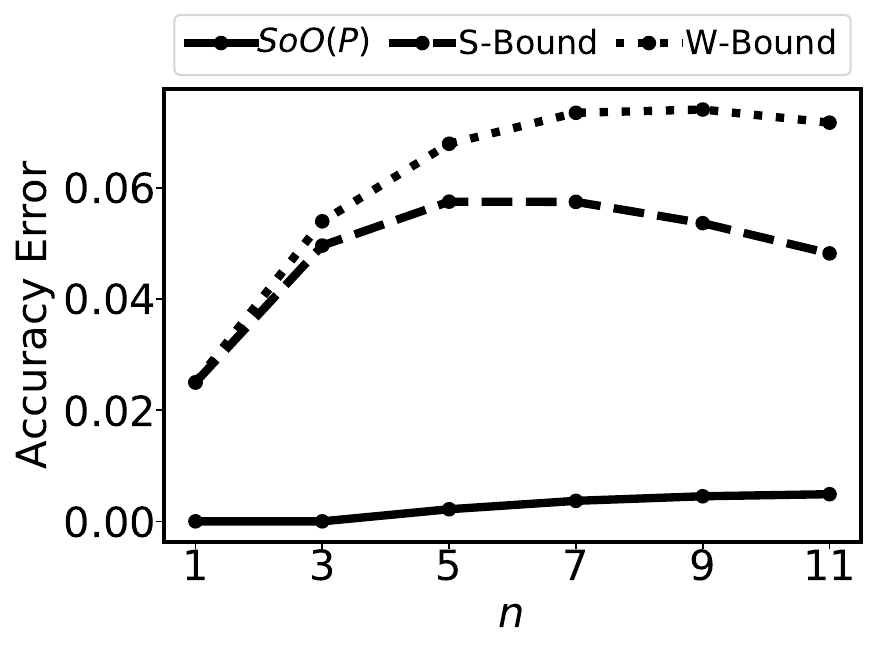}\label{Fig:SoO Parameter n}}
	\end{minipage}\vspace{-2mm}
	\begin{minipage}[c]{0.48\textwidth}
		\centering 
		\subfigure[Effect of single $\pest_1$]{
			\includegraphics[width=0.48\linewidth]{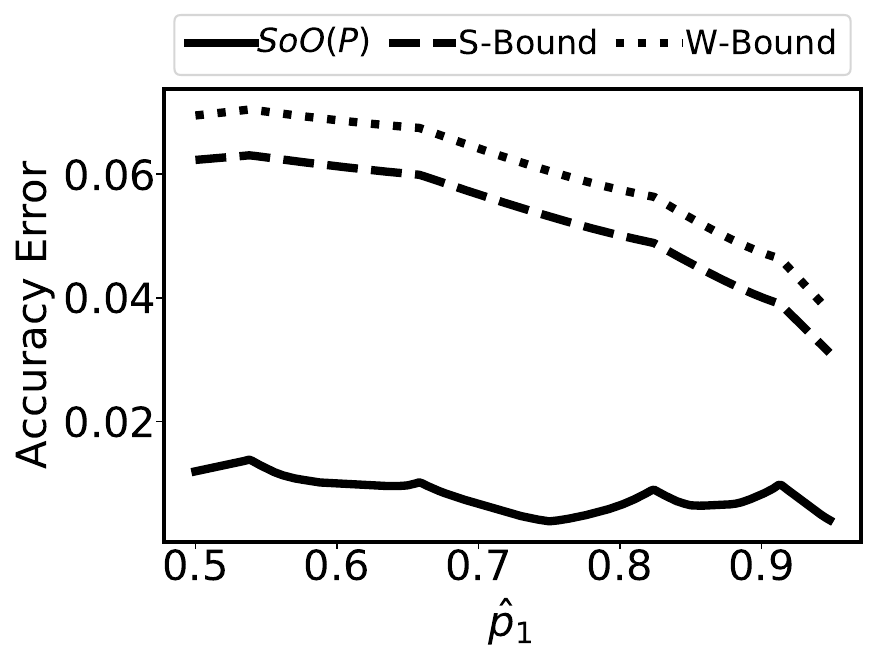}\label{Fig:SoO Parameter P One_Dim}}
		\subfigure[Effect of identical $\pestv$]{
			\includegraphics[width=0.48\linewidth]{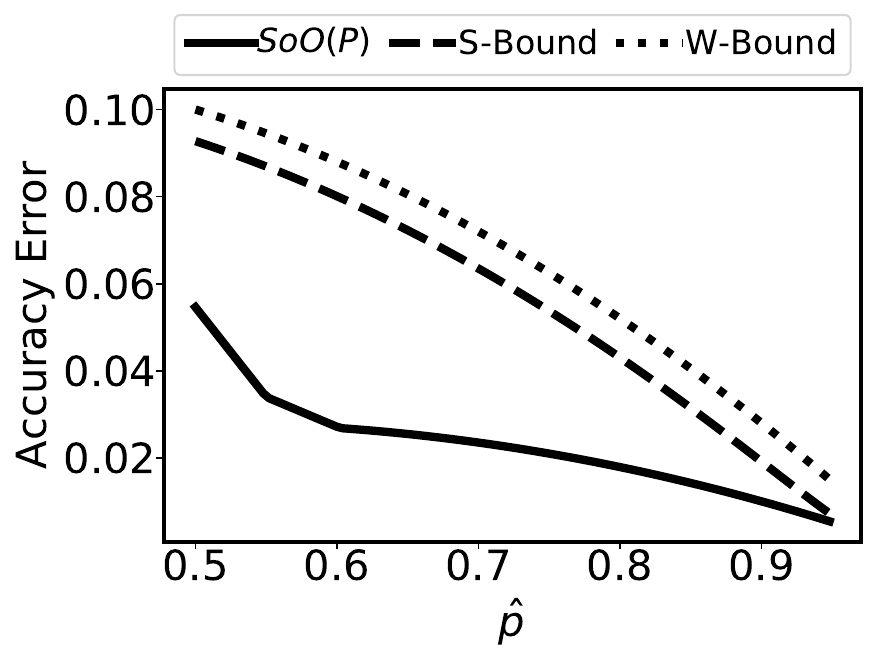}\label{Fig:SoO Parameter P Identical}}
	\end{minipage}
	\caption{Parameter Analysis on Stability of Optimality}
	\label{fig:Parameter Analysis on Stability of Optimality}
\end{figure}

In Figure~\ref{fig:Parameter Analysis on Stability of Optimality}, we provide a parameter analysis with numerical experiments to demonstrate how they influence $SoO(\pdisv)$ and the bounds. 
Example \ref{exp} is used in Figures \ref{Fig:SoO Parameter Delta} and \ref{Fig:SoO Parameter P One_Dim}; Figures \ref{Fig:SoO Parameter n} and \ref{Fig:SoO Parameter P Identical} needed some adaptation, where the sources have identical $\pest=0.7$. 
And $\delta=0.05$ is the default for all the sources. 

Figure~\ref{Fig:SoO Parameter Delta} represents that with $\delta$ decreasing, $SoO(\pdisv)$ and its bounds also decrease to zero. 
There is a linear bound on the effect of $\delta$ (via the weaker bound). 
This implies that the uncertainty level of sources plays a significant role in determining the Stability of Optimality.
In Figure \ref{Fig:SoO Parameter n}, with the number of identical sources $n$ increasing ($\pest=0.7$), $SoO(\pdisv)$ and the bounds change little, which implies source number perhaps influences little on the accuracy gap $SoO(\pdisv)$ (i.e., the gap in accuracy between decision making using unbiased trust and using trustworthiness).
This means that the number of sources may barely influence how valuable it is to know the sources’ combined trustworthiness.

In Figure \ref{Fig:SoO Parameter P One_Dim}, only $\pest_1$ is variable and it shows that $SoO(\pdisv)$ always remains low level and it is a piecewise function with local maximization.
As studied in Section \ref{sub:DSA}, it becomes evident that the local maximization results from the piece-wise nature of the trustworthiness effect on decision accuracy.
In Figure~\ref{Fig:SoO Parameter P Identical}, where all $\pest$ are equal and increase to $1$ simultaneously, $SoO(\pdisv)$ almost decreases to $0$. 
It makes sense because with the trustworthiness increasing, the decision accuracy increases more slowly due to the concavity of WMV with identical sources (See Lemma \ref{property of m-identical Weighted Majority Voting}).

To conclude, the gap of Stability of Optimality $SoO(\pdisv)$ is somewhat sensitive to parameter $\delta$, which depicts the range and variance of sources \trustworthiness{}, but not sensitive to the number of sources and other parameters.
Overall, the optimality of WMV has a high degree of stability, meaning $SoO(\pdisv)$ tends to be close to $0$.


\section{Conclusion and Future Work}
The common dependence on an estimate or trust of source trustworthiness brings out the need to analyze whether WMV is stable, meaning having tolerant decision inaccuracy with the difference between trust and trustworthiness bounded.

We first analyze how sensitive WMV is to the changes in~\trust{} and~\trustworthiness{}. 
We find that small deviation between \trust{} and \trustworthiness{} does not affect accuracy, and also underestimation usually harms less than overestimation.
We then introduced two statistical properties of WMV, \emph{Stability of Correctness} and \emph{Stability of Optimality}.
Assuming that on average the estimation procedure has no bias towards over or underestimating, we proved that \emph{Stability of Correctness} holds absolutely, regardless of which estimation procedure is used or how well it estimates. 
This guarantees that relying on an unbiased estimate of source trustworthiness is safe, which is also common in practice.
However, the amount of inefficiency introduced by relying on an estimate instead of the trustworthiness itself is limited, as we prove a linear bound on \emph{Stability of Optimality}.
The proposed formal framework and the two types of stability properties can be generalized to analyze other types of decision mechanisms or scenarios (e.g., where sources are dependent).


For future work, beyond the bounded assumption, it's valuable to explore a more precise characterization of the impact of the \trustworthiness{} distribution on SoO in the unbiased setting.
Besides, it is also worth studying the stability of WMV in a more general case, namely when \trust{} is a biased estimate of \trustworthiness{}. 
Some researchers have found that although some sources are assigned weights, they have no influence on the decision result \cite{allouche2021social,bowen2009weighted}.
In other words, we may distribute more estimate error on such sources.

\begin{acks}
This work was supported by National Natural Science Foundation of China (NSFC) under Grant 62106223 and (NSFC) Grant 62293511.
\end{acks}

\bibliographystyle{ACM-Reference-Format}  
\bibliography{stability}


\end{document}